\title{Convex Hull Monte-Carlo Tree-Search}
\author{
Michael Painter, Bruno Lacerda, Nick Hawes \\
Oxford Robotics Institute, University of Oxford \\
Oxford, UK\\
\{mpainter, bruno, nickh\}@robots.ox.ac.uk
}
\newcommand{\citet}[1]{\citeauthor{#1} \shortcite{#1}}
\newcommand{\citep}{\cite}
\theoremstyle{definition}
\newtheorem{defn}{Definition}
\newtheorem{thrm}{Theorem}
\newtheorem{ex}{Example}
\newenvironment{proofoutline}
 {\proof[Proof Outline]}
 {\endproof}
\DeclareMathOperator*{\argmax}{arg\,max}
\renewcommand{\d}[1]{\ensuremath{\operatorname{d}\!{#1}}}
\newcommand{\cl}[1]{\mathcal{#1}}
\newcommand{\mtt}[1]{\mathtt{#1}}
\newcommand{\set}[1]{\hat{\mathbf{#1}}}
\begin{document}

\maketitle

\begin{abstract}
    This work investigates Monte-Carlo planning for agents in stochastic environments, with multiple objectives. We propose the \textit{Convex Hull Monte-Carlo Tree-Search} (CHMCTS) framework, which builds upon \textit{Trial Based Heuristic Tree Search} and \textit{Convex Hull Value Iteration} (CHVI), as a solution to multi-objective planning in large environments. Moreover, we consider how to pose the problem of approximating multi-objective planning solutions as a \textit{contextual multi-armed bandits} problem, giving a principled motivation for how to select actions from the view of \textit{contextual regret}. This leads us to the use of \emph{Contextual Zooming} for action selection, yielding  \textit{Zooming CHMCTS}. We evaluate our algorithm using the \textit{Generalised Deep Sea Treasure} environment, demonstrating that Zooming CHMCTS can achieve a sublinear contextual regret and scales better than CHVI on a given computational budget.
\end{abstract}

\section{Introduction}

In \textit{Multi-Objective Planning under uncertainty} (MOPU) an agent has to plan in a stochastic environment while trading off multiple objectives. Thus, we often assume the agent's environment is  known and modelled as a \textit{Multi-Objective Markov Decision Process} (MOMDP).
%
This problem is useful to consider as agents often encounter environments with uncertainty in their dynamics. Additionally, it is often useful for agents to be able to balance different objectives, for which the priorities are not known a priori or may change over time.
MOPU has been applied to tackle problems in several domains, such as having a robot  trade-off between battery consumption and performance in its primary task~\citep{lahijanian2018resource} or between timely achievement of a primary tasks whilst achieving as many soft goals as possible~\citep{lacerda_icaps17}; or when planning how to lay new electrical lines while trading installation and operational costs for network reliability~\citep{sahoo2012multi}.
The objective of MOPU is to compute a set of values, and associated policies, known as a \textit{Pareto Front}, that is optimal for some prioritisation over the objectives. Unfortunately, because there is not a single optimum value, as is the case in single-objective planning, there is an additional dimension of complexity to handle. In single-objective planning under uncertainty two techniques have recently lead to a significant improvement in the scalability of algorithms: \textit{Monte-Carlo Tree-Search} (MCTS)~\citep{kocsis2006bandit} and \textit{Value Function Approximation}~\citep{sutton2000policy}. Note that these techniques do not have to be mutually exclusive, as demonstrated in a system such as Alpha Go~\citep{silver2017mastering}.  However,  there has been relatively little work in adapting either of these techniques to the multi-objective setting~\citep{perez2015multiobjective,xu2017chebyshev,chen2019pareto,Mossalam2016,abels2018dynamic}.

An additional gap that needs to be addressed in the multi-objective setting is the need for principled approaches to evaluate the \emph{online performance of trial based planning algorithms} (such as MCTS). 
In this paper, we propose a regret based metric to do so.
We consider a sequence of trials, each with a known priority over the different objectives, such that the performance of the algorithm can be mapped to a scalar value that we wish to optimise.
This leads us to use the notion of \textit{Contextual Regret} as a measure of the online performance of a MOPU algorithm.

The main contributions of this work are: (i) applying the notion of contextual regret to \textit{multi-objective planning}, and justify that exploration policies that achieve low contextual regret explore the trade-offs between objectives appropriately, as opposed to other metrics proposed in the literature; (2) proposing \textit{Contextual Zooming for Trees}, that outperforms prior work on this metric.

\section{Related Work}

In the \textit{Multi-Objective Multi-Armed Bandit} (MOMAB) problem, an agent must pick one of $K$ arms round by round such that we optimise a multi-objective payoff vector. The MOMAB problem can be considered a special case of MOPU, as it can be mapped to a finite horizon MOMDP with a single state with $K$ actions, one for each arm. 
\citet{drugan2013designing} address the MOMAB problem by defining a set of arms, the \textit{Pareto Front}, that are all considered optimal, in the sense that the performance in one objective cannot be improved without degrading the performance for least one of the other objectives. They extend the well known UCB1~\citep{auer2002finite} algorithm to the MOMAB problem, using a multi-dimensional confidence interval rather than a one-dimensional confidence interval. Because the algorithm was adapted from UCB1, they refer to it as \textit{ParetoUCB1}.

\textit{Multi-objective sequential decision making} (which includes \textit{multi-objective planning}) extends ideas from single-objective sequential decision making algorithms, to handle a vector of rewards. In such problems, the solution to be computed is a \textit{Pareto Front} or a \textit{Convex Hull}, which is generally accepted to represent every possible trade off between objectives that could be made. For an in depth introduction to the field, see~\citep{roijers2013survey}.

Prior work in \textit{Multi-Objective Monte-Carlo Tree-Search} (MOMCTS) can be divided into two categories, those that maintain a single scalar value estimate at each node, which we will call \textit{Point-Based MOMCTS}, and those that maintain an estimate of the Pareto front at each node. 

In terms of point-based MOMCTS, \citet{wang2012multi} maintain a global Pareto front for the root node, and during trials, they select successor nodes based on how ``close'' the value estimate in the child nodes are to the global Pareto front. In \citep{chen2019pareto} the Pareto front for a node is formed from the value estimates of the children nodes, and actions are selected by running ParetoUCB over those points. 

In terms of  maintaining an estimate of the Pareto front at each node, \citet{perez2013online,perez2015multiobjective,perez2016multi} define an MOMCTS algorithm for games with deterministic transitions. The deterministic assumption simplifies the operations for updating the Pareto fronts. Our algorithm uses generalised versions of these updates, allowing for stochastic settings too. The rule for selecting successor nodes is adapted from the UCB1 algorithm~\citep{auer2002finite} over the \textit{hypervolumes} of child nodes. The hypervolume can be thought of as the area under the Pareto front. \citet{xu2017chebyshev} use the same algorithm that is presented by \citet{perez2013online}, however, they replace the use of the hypervolumes in UCB1 with the \textit{Chebychev scalarization function}.

All the works described above either assume a deterministic environment or do not maintain an estimated Pareto front at each search node. In this paper, we will show why maintaining an estimated Pareto front at each search node is required to fully explore the space of solutions, and introduce an approach that does so for \emph{stochastic} models.


\section{Preliminaries}

\subsection{DP-UCT}

\citet{keller2013trial} introduce the \textit{Trial-Based Heuristic Tree Search} (THTS) framework, that generalises trial-based planning algorithms for (single-objective) Markov Decision Processes (MDPs). In particular this framework can be specialised to give the \textit{Monte-Carlo Tree-Search} (MCTS) algorithm~\citep{browne2012survey} and the (UCT) algorithm~\citep{kocsis2006bandit}, the most used variant of MCTS, which uses \textit{UCB applied to trees} for action selection. THTS builds a search tree from \textit{decision nodes} and \textit{chance nodes} that correspond to state and state-action pairs in an MDP, respectively. Moreover, THTS is defined modularly, with different variants of algorithms being specified using seven functions, including  
\texttt{selectAction}, 
\texttt{backupDecisionNode} and 
\texttt{backupChanceNode}. For completeness we give an overview of THTS and pseudocode in Appendix A.

Due to the modularity of THTS, it is  easy to arrive at new algorithms by altering one or a few of the seven functions. \citet{keller2013trial} utilise this technique to arrive at DP-UCT, an adaption of the standard UCT algorithm that replaces Monte-Carlo backups with dynamic programming backups, and is shown empirically to outperform UCT in many domains.
Broadly speaking, DP-UCT algorithm is split into trials. Each trial traverses the tree until a leaf node is found. Once a leaf node has been reached, all nodes that were visited during the trial are backed up to update their value estimates. \texttt{selectAction} runs UCB1 at each decision node and \texttt{selectOutcome} samples successor states for each chance node. The \texttt{backupDecisionNode} and \texttt{backupChanceNode} perform Bellman backups in each node, using its children as the successor states in the backup.

In this work we, specify \textit{Convex Hull Monte-Carlo Tree-Search} under the THTS framework, by describing the adaptations made to  DP-UCT, rather than re-describing the standard parts of the algorithm. In particular, Section~\ref{sec:chmcts} details how to replace the \texttt{backupDecisionNode} and \texttt{backupChanceNode} functions in DP-UCT and Section~\ref{sec:action_selection} describes how to alter the \texttt{selectAction} function.

\subsection{Multi-Objective Planning under Uncertainty}

We will model MOPU problems using a Multi-Objective Markov Decision Process: 

\begin{defn} \label{def:momdp}
    A finite horizon \textit{Multi-Objective Markov Decision Process} (MOMDP) is a tuple $M=\langle S,A,\mathbf{R},T,\bar{s},H \rangle$, where:
    $S$ is a finite set of \textit{states}; 
    $A$ is a finite set of \textit{actions};
    $\mathbf{R}:S\times A \rightarrow \mathbb{R}^D$ is a \textit{D-dimensional reward function}, specifying $D$ immediate rewards for taking action $a$ when in state $s$;
    $T:S\times A\times S \rightarrow [0,1]$ is a \textit{transition function} specifying for each state, action and next state, the probability that the next state occurs given the current state and action;
    $\bar{s}$ is the \textit{initial state};
    and 
    $H\in\mathbb{N}$ is a finite \textit{horizon}. 
\end{defn}

In MOPU we are concerned with the optimisation of a vector $\mathbf{V}^\pi$, the sum of each reward observed over the $H$ time-steps, by an agent following a policy $\pi$. A (stochastic) policy $\pi$  maps  histories of visited states $\cl{H}$
to distributions over actions: given a history of visited states $h \in \cl{H}$, $\pi(h)(a)$ represents the probability of the policy choosing   to execute $a$, given that the history of visited states is $h$.
Given a MOMDP $M$, a policy $\pi$ and a state $s$, we define the random variable representing the cumulative reward attained in $H$ timesteps starting in $s$ and following $\pi$:
\begin{equation}
\label{eq:exp_cumul}
 \bm{X}^{\pi}(s) = \left( \sum_{t=1}^{H} \mathbf{r}_t\ \big|\ \pi, s_0=s \right), 
\end{equation}
\noindent where $\mathbf{r}_t$ is the reward observed at timestep $t$ according to $\mathbf{R}$.

\begin{defn} \label{def:value}
    The \textit{value} of a policy $\pi$ is a function $\mathbf{V}^\pi:S \rightarrow  \mathbb{R}^D$ such that:
    \begin{align}
        \mathbf{V}^\pi(s) = (V^\pi_1(s), ..., V^\pi_D(s))= \mathbb{E}[\bm{X}^\pi(s)].
    \end{align}
\end{defn}

 In the remainder of this paper, when $s = \bar{s}$ we will omit it and simply write $\bm{X}^{\pi}$ and $\mathbf{V}^\pi= (V^\pi_1, ..., V^\pi_D)$.
 For a set of policies $\Pi$, we define the set of points $\set{V}(\Pi) \subseteq \mathbb{R}^D$ such that  $\set{V}(\Pi) = \{ \mathbf{V}^\pi\ |\ \pi\in \Pi \}$.
Often there will be no single best policy that can be chosen, as one may encounter a situation where we have $V^{\pi}_i > V^{\pi'}_i$, but, $V^{\pi}_j < V^{\pi'}_j$ for some policies $\pi,\pi'$ and $i,j\in \{1,...,D\}$. However, we can define a partial ordering over multi-objective values:

\begin{defn} \label{def:pareto_dominate}
    We say that a point $\bm{u} \in \mathbb{R}^D$ \textit{weakly Pareto dominates} another point $\mathbf{v} \in \mathbb{R}^D$ (denoted $\mathbf{u} \succeq \mathbf{v}$), if $\forall i. u_i \geq v_i$;
    $\bm{u}$ \textit{Pareto dominates} $\mathbf{v}$ (denoted $\mathbf{u} \succ \mathbf{v}$), if $\mathbf{u}\succeq\mathbf{v} \land \mathbf{u} \neq \mathbf{v}$;
    $\bm{u}$ is \textit{incomparable} to $\mathbf{v}$ (denoted $\mathbf{u} \parallel \mathbf{v}$), if $\mathbf{u} \nsucceq \mathbf{v} \land \mathbf{v} \nsucceq \mathbf{u}$.         
\end{defn}


Given Pareto domination as partial ordering over multi-objective values, we can now define an `optimal set' of policies, in which no policy Pareto dominates another. 

\begin{defn} \label{def:pareto_front}
	Let $\Pi$ be a set of policies. The \textit{Pareto Front} $PF(\Pi)$ is the set of policies in $\Pi$ that are not Pareto dominated by any other policy in $\Pi$:
    \begin{align}
        PF(\Pi) &= \{\pi\in\Pi\ |\ \forall\pi'\in\Pi.\ \mathbf{V}^\pi \nsucc \mathbf{V}^{\pi'} \}.
    \end{align}

\end{defn}

An alternative method to overcome the lack of a total order for multi-objective values is to project them onto some scalar value that can be compared.
First, we define the set $\cl{W}_D \subset \mathbb{R}^D$ of~\emph{normalised $D$-dimensional} weights as:

\begin{equation}
\cl{W}_D = \{\mathbf{w}=(w_1,\hdots,w_D) \ |\ \forall i.\ w_i\geq 0 \land \sum_{i=1}^D w_i = 1\}.
\end{equation}

\begin{defn}
    A \textit{scalarisation function} $f(\mathbf{V};\mathbf{w})$ maps a multi-objective value into a scalar value, where $\mathbf{w}\in \cl{W}_D$. The \textit{linear scalarization} is $f_\ell(\mathbf{v};\mathbf{w})=\mathbf{w}^\top\mathbf{v}$, which is \textit{(strictly) monotonic} (i.e. $\mathbf{u} \succ \mathbf{v}$ if and only if $\forall \mathbf{w}\in \cl{W}_D.\ f(\mathbf{u};\mathbf{w}) > f(\mathbf{v};\mathbf{w})$).
\end{defn}

A weight $\mathbf{w}\in \cl{W}_D$ and the linear scalarisation function gives a total ordering over policies, as $\pi,\pi'$ can be compared by comparing the scalar values $\mathbf{w}^\top \mathbf{V}^\pi$ to $\mathbf{w}^\top \mathbf{V}^{\pi'}$. Therefore, we can define another optimal set of policies as those that are optimal for some weight vector:

\begin{defn} \label{def:ccs}
    The \textit{Convex Hull} $CH(\Pi)$ is the set of policies in $\Pi$ that are optimal for some linear scalarization. 
    \begin{align}
         CH(\Pi) &= \{ \pi\in\Pi\ |\  \exists\mathbf{w}. \forall\pi'\in\Pi.\ \mathbf{w}^\top\mathbf{V}^\pi \geq \mathbf{w}^\top\mathbf{V}^{\pi'} \}.
    \end{align}

\end{defn}

If $\Pi$ is defined as the set of stochastic policies then $PF(\Pi)=CH(\Pi)$.
In order to represent convex hulls and Pareto fronts of $\Pi$ more compactly, we consider the notion of \textit{Convex Coverage Set} of $\Pi$.

\begin{defn} \label{def:undominated}
    A \textit{Convex Coverage Set} $CCS(\Pi)$ is any set such that:
    \begin{align}
\set{V}(CCS(\Pi))=\set{V}(CH(\Pi)).
    \end{align}
\end{defn}

For computational reasons, one typically wants to maintain a minimal set of policies that is still a $CCS(\Pi)$.
We refer the reader to~\citep{roijers2013survey} for more details on the relation  between $PF(\Pi)$, $CH(\Pi)$ and $CCS(\Pi)$.
%
%

To compare sets of points it is common to consider the \textit{hypervolume}, due to its monotonicity with respect to Pareto domination \citep{vamplew2011empirical}.
\begin{defn} \label{def:hypervolume}
    The \textit{hypervolume} of a set of points $\set{P}\subseteq \mathbb{R}^D$ can be defined with respect to a reference point $\mathbf{o}$ as follows:
    \begin{align}
        HV(P,\mathbf{r}) = \lambda_D(\{\mathbf{v}\in\mathbb{R}^D \ |\ \exists \mathbf{p}\in \set{P}.\ \mathbf{p} \succeq \mathbf{v} \succeq \mathbf{o} \}), \label{eqn:hypervolume}
    \end{align}
    where $\lambda_D$ is the $D$-dimensional Lebesgue measure. For example, in two dimensions with $\mathbf{o} = (0,0)$, this equates to the area between $\set{P}$ and the 
    $x$ and $y$ axes.
\end{defn}

\subsection{Convex Hull Value Iteration} \label{sec:chvi}

\citet{barrett2008learning} proposed an algorithm that extends Value Iteration~\citep{Bellman:1957} to handle multiple objectives, named \textit{Convex Hull Value Iteration} (CHVI). CHVI  computes $\set{V}(PF(\Pi))$ for an infinite-horizon MOMDP by computing a finite set of deterministic policies $\Pi_d$ that is a $CCS(\Pi)$, where $\Pi$ is the set of stochastic policies.
 The same algorithm can be adapted to compute  a $CCS(\Pi)$ for a finite-horizon MOMDP, by computing the values for each timestep in the horizon. 

As the first step to arriving at CHVI, we define arithmetic rules over a set of points.
For sets of points $\set{P}, \ \set{P'} \subseteq \mathbb{R}^D$, point $\mathbf{b}\in\mathbb{R}^D$, and scalar $k\in\mathbb{R}$, we define:
\begin{align}
    \set{P} + \set{P'} &= \{ \mathbf{p}+\mathbf{p'}\ |\ \mathbf{p}\in\set{P}, \mathbf{p'}\in\set{P'}\}, \label{eq:ch_arith2} \\
    \mathbf{b} + k\set{P} &= \{ \mathbf{b} + k\mathbf{p}\ |\ \mathbf{p}\in\set{P} \}. \label{eq:ch_arith3}
\end{align}

Then, to arrive at the CHVI algorithm, we can replace the fixed point equations used in Value Iteration as follows:
\begin{align}
    \set{V}(s) &= \begin{cases}
        \set{0} & \text{ if } s \text{ terminal,} \\
        \mtt{prune } (\set{Q}(s,a)) & \text{ otherwise,}
    \end{cases} \label{eqn:chvi_v} \\
    \set{Q}(s,a) &= \mathbb{E}\left[\bm{R}(s,a) + \set{V}(s')\ |\ s,a \right], \label{eqn:chvi_q}
\end{align}

\noindent where $\set{V}(s)$ and $\set{Q}(s,a)$ are sets of points, $\set{0}=\{(0, \hdots, 0)  \} \subset \mathbb{R}^D$ and $\mtt{prune}$ is an operation that removes Pareto-dominated points from a set. For details on how to compute 
$\mtt{prune}$ see the survey by \citet{roijers2013survey}. The expectation is taken with respect to the next state $s'\sim T(s,a,\cdot)$, and can be computed using equations (\ref{eq:ch_arith2}) and (\ref{eq:ch_arith3}) in the standard definition of expectation for a discrete random variable.

\subsection{Contextual Regret} \label{sec:cmab}

The \textit{Contextual Bandit} framework extends the standard \textit{Multi-Armed Bandit} (MAB) problem~\citep{auer2002finite}. In this framework we are given a context $x_k\in X$ in round $k$ and have to pick an arm $y_k\in Y$.
Subsequently a scalar reward $r_k\sim U(x_k,y_k)$ is received, where  $U(x_k,y_k)$ is a reward distribution for arm $y_k$ in context $x_k$, with expected value $\mu(x_k,y_k)$. We denote the shared context-arm space as $\cl{P}\subseteq X\times Y$, and denote the optimal value of context $x\in X$ as $\mu^*(x)=\sup_{\{y\in Y \ | \ (x,y)\in\cl{P}\}} \mu(x,y)$.

\begin{defn} \label{def:contextual_regret}
    The \textit{contextual regret} for $N$ many rounds, is defined as follows:
    \begin{align}
        \mtt{reg}(N) = \sum_{k=1}^N \mu^*(x_k) - \mu(x_k,y_k),
    \end{align}
  \noindent where $y_k$ is the arm chosen at round $k$.
\end{defn}

Typically in contextual MABs, the aim is to maximise the \textit{total payoff} of $\sum_{k=1}^N r_k$, which corresponds to minimizing the contextual regret.
Algorithms usually aim to achieve \emph{sublinear regret}, i.e. pick arms such that $\mtt{reg}(N)\in o(N)$, where $f(x)\in o(x)$ if $\frac{f(x)}{x}\rightarrow 0$ as $x\rightarrow \infty$.

As the problem is very general, works often make additional assumptions, or assumptions of additional knowledge to make the problem more tractable. 
In particular, one assumption made by \citet{slivkins2014contextual} is having access to a metric space $(\cl{P},d)$ called the \textit{similarity space}, such that the following Lipschitz condition holds:
\begin{align}
    |\mu(x,y)-\mu(x',y')| \leq d((x,y),(x',y')). \label{eq:lipshitz}
\end{align}

This allows us to reason about the expected values of contexts that have not been seen in the past.

\section{Convex Hull Monte-Carlo Tree-Search} \label{sec:chmcts}

To motivate the design of our tree-search algorithm, we consider an example MOMDP with a horizon of length three. We will see that any Monte-Carlo tree-search algorithm that maintains a \textit{sample average} at each node, i.e. the average reward obtained from each action, will perform suboptimally.

\begin{ex} \label{ex:v}
    Consider state $s_0$ and actions $a_1$, $a_2$ and $a_3$ in the MOMDP in Fig. \ref{fig:ex1} with a horizon of 2.
    It is clearly optimal to take action $a_3$ in $s_0$ for both objectives (and mixes thereof), and the true Pareto front is $F^*=\{(0,6),(6,0)\}$.
    The  sample averages for $a_1$, $a_2$ and $a_3$ are $\bar{\bm{Q}}(s_0,a_1)=(0,4)$, $\bar{\bm{Q}}(s_0, a_2)=(4,0)$  and $\bar{\bm{Q}}(s_0,a_3)=(6\lambda, 6(1-\lambda))$, for some $\lambda\in[0,1]$  (note that choosing action $a_3$ from $s_0$ will have a return of either $(6,0)$ or $(0,6)$, and  $(6\lambda, 6(1-\lambda))$, $\lambda\in[0,1]$ represents a weighted average of the two possible returns).
If one used these sample averages, the estimated Pareto front at  $s_0$ for a given $\lambda$ would be $F(\lambda)=\texttt{prune}(\{(0,4), (4,0),(6\lambda, 6(1-\lambda))\}) \neq F^*$.
Therefore, if we try to use the sample averages set $F(\lambda)$ to extract a policy, then it may be suboptimal. For example, if $\lambda=\frac{1}{2}$ then $\bar{\bm{Q}}(s_0,a_3)=(3,3)$ and the policy $\pi$ extracted to maximise only the second objective would have $\pi(s_0)=a_1$, since $\bar{\bm{Q}}(s_0,a_1)=(0,4)$, i.e. the sample average for the second objective is higher for $a_1$ than it is for $a_3$.
\end{ex}

\begin{figure}
    \centering
    \includegraphics[scale=0.35]{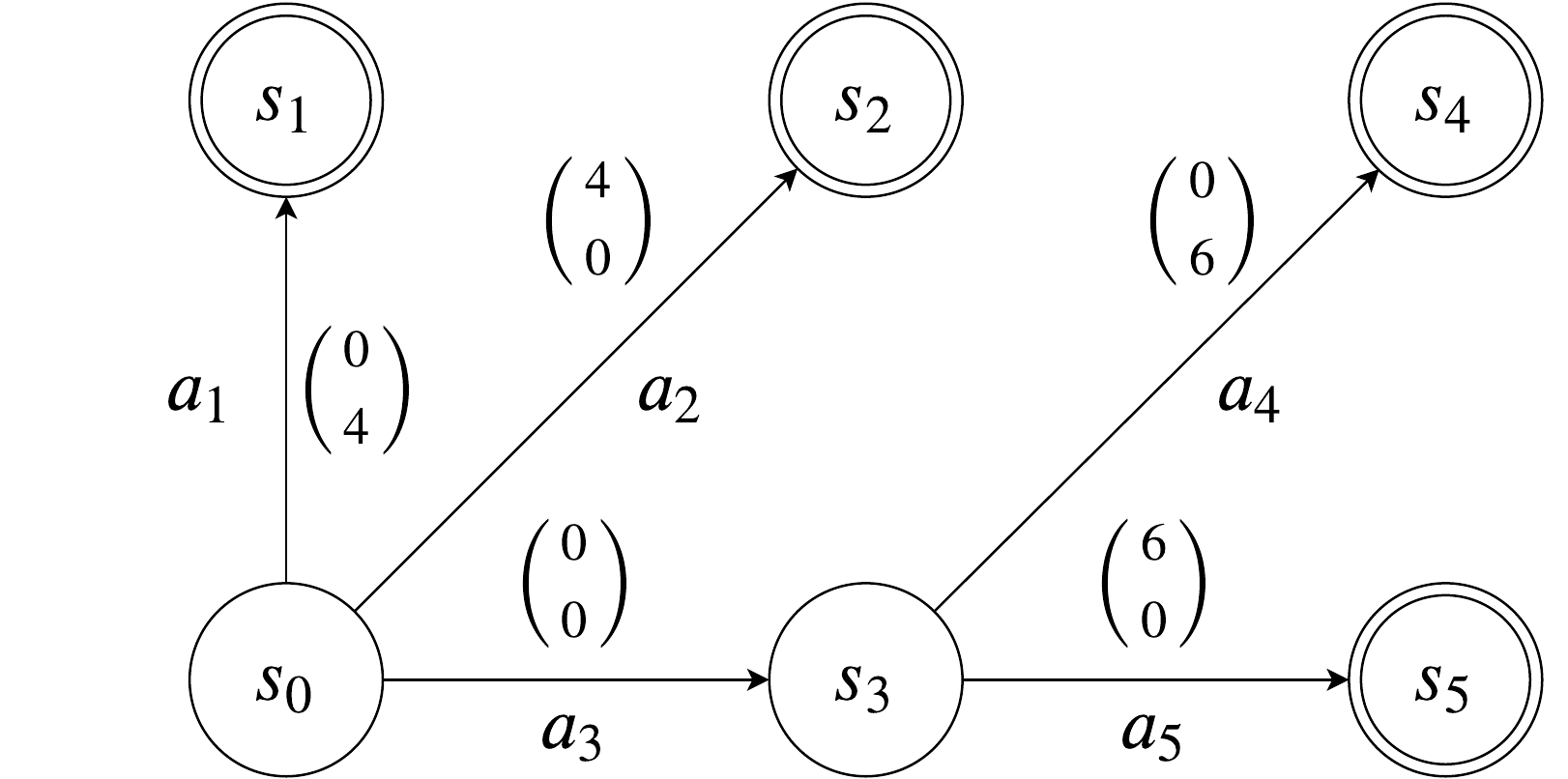}
    \caption{A deterministic MOMDP with six states, two-dimensional rewards, and initial state $s_0$. As it is deterministic, all transition probabilities are one, and so are omitted.} 
    \label{fig:ex1}
\end{figure}

Given Example \ref{ex:v}, it is clear that we need something more than just a single point in each node, so we will consider sets of points to approximate a Pareto front at each node. In the remainder of this section, we describe the backup functions that can be used as part of the THTS schema~\citep{keller2013trial}. Any algorithm that makes use of these backups in a Monte-Carlo Tree-Search we will refer to as a \textit{Convex Hull Monte-Carlo Tree-Search} (CHMCTS).

\subsection{Backup Functions}

For every THTS decision node, corresponding to some $s \in S$, we store a set of points  approximating $\set{V}(s)$, and for every chance node, corresponding to some $(s,a) \in S\times A$, we store a set of points approximating $\set{Q}(s,a)$.
The \texttt{backupDecisionNode} function updates the set approximating $\set{V}(s)$ using Equation (\ref{eqn:chvi_v}), but replacing each $\set{Q}(s,a)$ with its  approximation stored in the  corresponding child chance node.
 Similarly, \texttt{backupChanceNode} updates the approximation of $\set{Q}(s,a)$ using Equation~(\ref{eqn:chvi_q}), replacing each $\set{V}(s')$ with the approximation stored in the corresponding child decision node.

\section{Action Selection} \label{sec:action_selection}

Now we consider how to select actions from decision nodes (recalling decision nodes correspond to states in an MOMDP). We present the problem of policy selection (i.e. selecting all of the actions for a trial) framed as a Contextual Multi-Armed Bandit problem (Section \ref{sec:cmab}):

\begin{defn} \label{def:lcps}
    \textit{(Linear) Contextual Policy Selection} problem, is a special case of the Contextual Bandit problem. Let $M$ be a MOMDP and $\Pi$ the corresponding set of policies. For $N$ rounds (or \textit{trials}), we perform the follow sequence of operations: (1) receive a context $\bm{w}_k\in \cl{W}_D$; (2) select a policy $\pi_k$ to follow for this trial; (3) receive a cumulative reward $\bm{x}^{\pi_k}\in\mathbb{R}^D$, where $\bm{x}^{\pi_k} \sim \bm{X}^{\pi_k}$ as defined in Equation~(\ref{eq:exp_cumul}) -- recall that $\bm{V}^{\pi_k}=\mathbb{E}[\bm{X}^{\pi_k}]$. The objective over $N$ rounds is to select a sequence of policies $\{\pi_k\}_{k=1}^N$ that maximise the \textit{expected cumulative payoff} $\sum_{k=1}^N \bm{w}_k^\top \bm{V}^{\pi_k}$.
\end{defn}

\subsection{Design by Regret} \label{sec:design_by_regret}

In THTS, designing action selection using regret metrics typically has two main benefits  it is a direct measure of the \textit{online} performance of the action selection; and it is  a good way to balance the exploration-exploitation trade-off, as it selects arms proportionally to how good the performance of each arm has been in the past. Previous multi-objective works~\citep{drugan2013designing,chen2019pareto} have considered the notion of Pareto regret:

\begin{defn} \label{def:pareto_regret}
    The \textit{Pareto Suboptimality Gap} (PSG) for selecting policy $\pi$ is defined as:
    \begin{align}
        \Delta(\pi) = \inf\{\epsilon\in\mathbb{R}\ |\ \forall\pi'\in\Pi.\ \bm{V}^{\pi'} \nsucc \bm{V}^\pi + \epsilon \bm{1} \},
    \end{align}
    where $\bm{1}$ is a vector of ones. Intuitively, we can think of $\Delta(\pi)$ as how much needs to be added to $\bm{V}^\pi$ so that it is \textit{Pareto optimal}, i.e. it is not dominated by any other policy. Let $N_{\pi}$ be the number of times that $\pi$ was selected for the $N$ trials.
    The \textit{Pareto regret}  is defined as:
    \begin{align}
        \mtt{reg}_P(N) = \sum_{\pi\in\Pi} N_\pi\Delta(\pi).
    \end{align}
\end{defn}

To demonstrate why Pareto regret is not the most suitable regret metric, we consider Example \ref{ex:two}, which demonstrates that optimising for the Pareto regret does not correspond to our objective of computing the CCS for a MOMDP.  
%

\begin{ex} \label{ex:two}
    Consider the algorithm that uses the (single-objective) UCT algorithm~\cite{kocsis2006bandit} to find $\pi_i^* = \argmax_\pi \bm{V}^\pi_i$, the optimal policy for the $i$th objective and then continues to follow $\pi_i^*$.
   UCT is known to have sublinear regret for the $i$th objective and, because  $\pi_i^*$ is Pareto-optimal, it also has sublinear Pareto regret.
However, this algorithm does not align well with the objective of computing a CCS, as it focuses only on one Pareto-optimal policy and does not explore the rest of the CCS.
\end{ex}

Following from this, we introduce \textit{Linear Contextual Regret}, a special case of \textit{contextual regret}, that is well correlated with approximating the CCS.

\begin{defn}
    Let $\{\bm{w}_k\}_{k=1}^N$ be a \emph{context weight vector}, i.e. a sequence of weights  sampled uniformly from $\cl{W}_D$.
    The \textit{Linear Contextual Regret} (LCR) for the policy selection $\{\pi_k\}_{k=1}^N$ is defined by:
    \begin{align}
        \mtt{reg}_C(N, \{\bm{w}_k\}_{k=1}^N) = \sum_{k=1}^N \max_{\pi'\in\Pi} \bm{w}_k^\top \bm{V}^{\pi'} - \bm{w}_k^\top \bm{V}^{\pi}, 
    \end{align}
\end{defn}


We consider LCR because it will penalise any algorithm that cannot find a $CCS(\Pi)$: if  there exists some $\mathbf{V}^\pi  \in  \set{V}(PF(\Pi))$ such that no policy $\pi'$ with $\bf{V}^{\pi'} =\bf{V}^\pi$ was found, then  a weight close to $\bm{w}^* = \argmax_{\bm{w}} \bm{w}^\top \bm{V}^\pi$ is sampled, it will accumulate regret.

Recall from Definition~\ref{def:lcps}, we aim to maximise expected cumulative payoff, which, is equivalent to minimizing the expected LCR. Note that for any algorithm that achieves a sublinear LCR, the average regret will tend to zero, and in the limit of the number of trials the algorithm must almost surely act optimally for all weight vectors.

\subsection{Exploration Policies} \label{sec:exploration_policy}

We now formally define exploration policies:
\begin{defn} \label{def:expl_policy}
    An \textit{exploration policy} for MOMDP $M$ is a function of the form $\sigma:\cl{H}\times \cl{W}_D \rightarrow A$. 
\end{defn}

In essence, an exploration policy maps a history and a context weight vector to an action, i.e. it extends a policy to consider the context. 
Algorithms for the contextual policy selection problem must specify a sequence of exploration policies $\{\sigma_k\}_{k=1}^N$, where the MOMDP policy followed on trial $k$ will be $\pi_k=\sigma(\cdot;\bm{w}_k)$. The set of exploration policies $\Pi_e$ for a MOMDP can be divided into two broad classes:

\begin{defn}
    A \textit{context-free} exploration policy $\sigma$ is one such that $\forall \bm{u},\bm{w}\in \cl{W}_D. \sigma(\cdot;\bm{u}) = \sigma(\cdot;\bm{w})$.  An exploration policy is \textit{context-aware} if it is not context-free.
\end{defn}

Theorem~\ref{thrm:one} shows any sequence of context-free exploration policies can suffer linear LCR.

\begin{thrm}\label{thrm:one}
    For some MOMDP $M$, for any sequence of context-free policies $\{\sigma_k\}_{k=1}^N$, the expected LCR over $N$ trials is $\Omega(N)$. 
\end{thrm}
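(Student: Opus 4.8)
# Proof Proposal

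The plan is to exhibit a specific MOMDP where the context-free restriction forces any algorithm to commit, in effect, to a single point of the Pareto front (up to the small amount of exploration it does), and then show that a uniformly random sequence of weights $\{\bm{w}_k\}$ will, with constant probability per trial, pick a weight for which that committed point is $\Omega(1)$ away in linear-scalarised value from the optimum. Concretely, I would take $D=2$ and a horizon-1 (or horizon-2) MOMDP with an initial state and two actions $a_1,a_2$ leading deterministically to rewards $\bm{V}^{a_1}=(1,0)$ and $\bm{V}^{a_2}=(0,1)$ — essentially the ``Deep Sea Treasure in miniature'' from Example~\ref{ex:v}. Here the set of stochastic policies gives $\set{V}(PF(\Pi))$ equal to the segment between $(1,0)$ and $(0,1)$, and the optimum for weight $\bm{w}=(w,1-w)$ is $\max(w,1-w)$.

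The key observation is that a context-free exploration policy $\sigma_k$ ignores its weight argument, so it induces a single distribution over MOMDP policies $\pi_k=\sigma_k(\cdot;\bm{w}_k)$ that does not depend on $\bm{w}_k$. Thus on trial $k$, $\bm{V}^{\pi_k}$ is a random vector whose law is independent of $\bm{w}_k$; writing $\bm{V}^{\pi_k}=(p_k,1-p_k)$ in expectation for the mixing probability $p_k=\Pr[\sigma_k\text{ chooses }a_1]$ (after taking expectations over any internal randomness and over the horizon-2 chance node in the style of Example~\ref{ex:v}), the realised scalarised value is at most $\max(p_k,1-p_k)\le$ the achievable value, and more importantly the instantaneous regret is
\begin{align}
\max_{\pi'}\bm{w}_k^\top\bm{V}^{\pi'}-\bm{w}_k^\top\bm{V}^{\pi_k}
= \max(w_k,1-w_k) - \big(w_k p_k + (1-w_k)(1-p_k)\big).
\end{align}
Since $\bm{w}_k$ is drawn uniformly and independently of $p_k$, I would lower-bound the expectation of this quantity over $w_k\in[0,1]$ by a positive constant $c$ that does not depend on $p_k$: for any fixed $p\in[0,1]$, $\mathbb{E}_{w}[\max(w,1-w)-(wp+(1-w)(1-p))]\ge c>0$ — a one-line integral, minimised at $p=\tfrac12$ where it equals $\tfrac14$ minus the mean of the affine part, still a fixed positive number. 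Summing over $k=1,\dots,N$ gives expected LCR $\ge cN=\Omega(N)$.

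The main obstacle — and the place I would be most careful — is handling the full generality of ``any sequence of context-free exploration policies'': $\sigma_k$ may depend adaptively on the entire history of past trials, and in the stochastic (horizon-2) version the policy does not deterministically pick a point but induces a possibly non-Pareto-optimal expected value via the chance node, exactly as in Example~\ref{ex:v}. The clean way around this is to condition on everything observed before trial $k$: conditionally, $\sigma_k$ is a fixed context-free policy, hence $\bm{V}^{\pi_k}$ (in conditional expectation) is some fixed vector on or below the segment, $\bm{w}_k$ is still uniform and independent, and the per-trial lower bound $c$ applies conditionally; then take total expectation and sum. One should also note the ``for some MOMDP $M$'' quantifier means we only need this single instance, so no attempt at a general structural characterisation is required. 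The remaining details — that the segment between $(1,0)$ and $(0,1)$ really is $\set{V}(PF(\Pi))$ for stochastic policies, and the elementary integral bounding $c$ away from $0$ — are routine and I would relegate them to the appendix.
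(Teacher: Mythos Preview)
Your proposal is correct and follows essentially the same route as the paper: both exhibit the horizon-one, two-action deterministic MOMDP with rewards $(1,0)$ and $(0,1)$ (this is the paper's Fig.~\ref{fig:proof_momdp1}, not Example~\ref{ex:v}), observe that a context-free $\sigma_k$ fixes the action independently of $\bm{w}_k$, and integrate the per-trial regret against a uniform $\bm{w}_k$ to obtain exactly $\tfrac14$, then sum over $k$. Your conditioning argument for adaptive and randomised $\sigma_k$ is slightly more careful than the paper's two-case analysis, and note that the integral is in fact $\tfrac14$ for \emph{every} $p\in[0,1]$ (since $\mathbb{E}_w[wp+(1-w)(1-p)]=\tfrac12$ identically in $p$), not merely at the putative minimiser $p=\tfrac12$.
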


\begin{proofoutline}
    \begin{figure}
        \centering
        \includegraphics[scale=0.4]{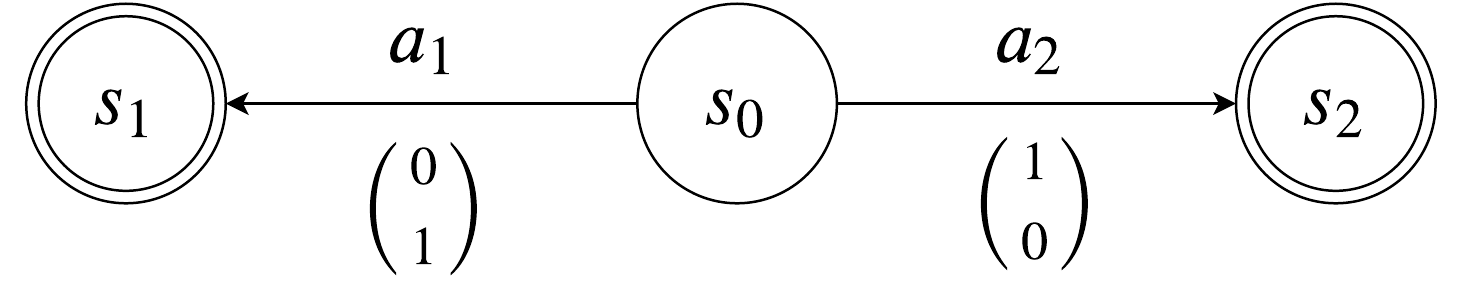}
        \caption{A deterministic MOMDP with three states, two-dimensional rewards and a horizon of one.} 
        \label{fig:proof_momdp1}
    \end{figure}
    
    (See Appendix B for full proof.) Consider the MOMDP from Fig. \ref{fig:proof_momdp1}. If we follow a context-free exploration policy $\sigma_k$ on the $k$th trial, then the action selected at $s_0$ is independent of the weight context vector. Assume, wlog, that $\sigma_t(s_0;\bm{w})=a_1$ for all $\bm{w}$. As $\bm{w}$ is sampled uniformly from $\cl{W}_D$, $a_1$ will be the suboptimal action with probability $\frac{1}{2}$. If $a_1$ is suboptimal, the expected regret suffered is $\frac{1}{2}$, because if the weight vector is varied from $(\frac{1}{2},\frac{1}{2})$ to $(1,0)$, then the contextual regret suffered varies from $0$ to $1$. On average, the context-free policies will suffer and expected regret of $\frac{1}{4}$ per trial, and thus the cumulative regret over $N$ is $\Omega(N)$.
\end{proofoutline}

\subsection{Context-Aware Action Selection}

\begin{figure}
    \centering
    \includegraphics[scale=0.3]{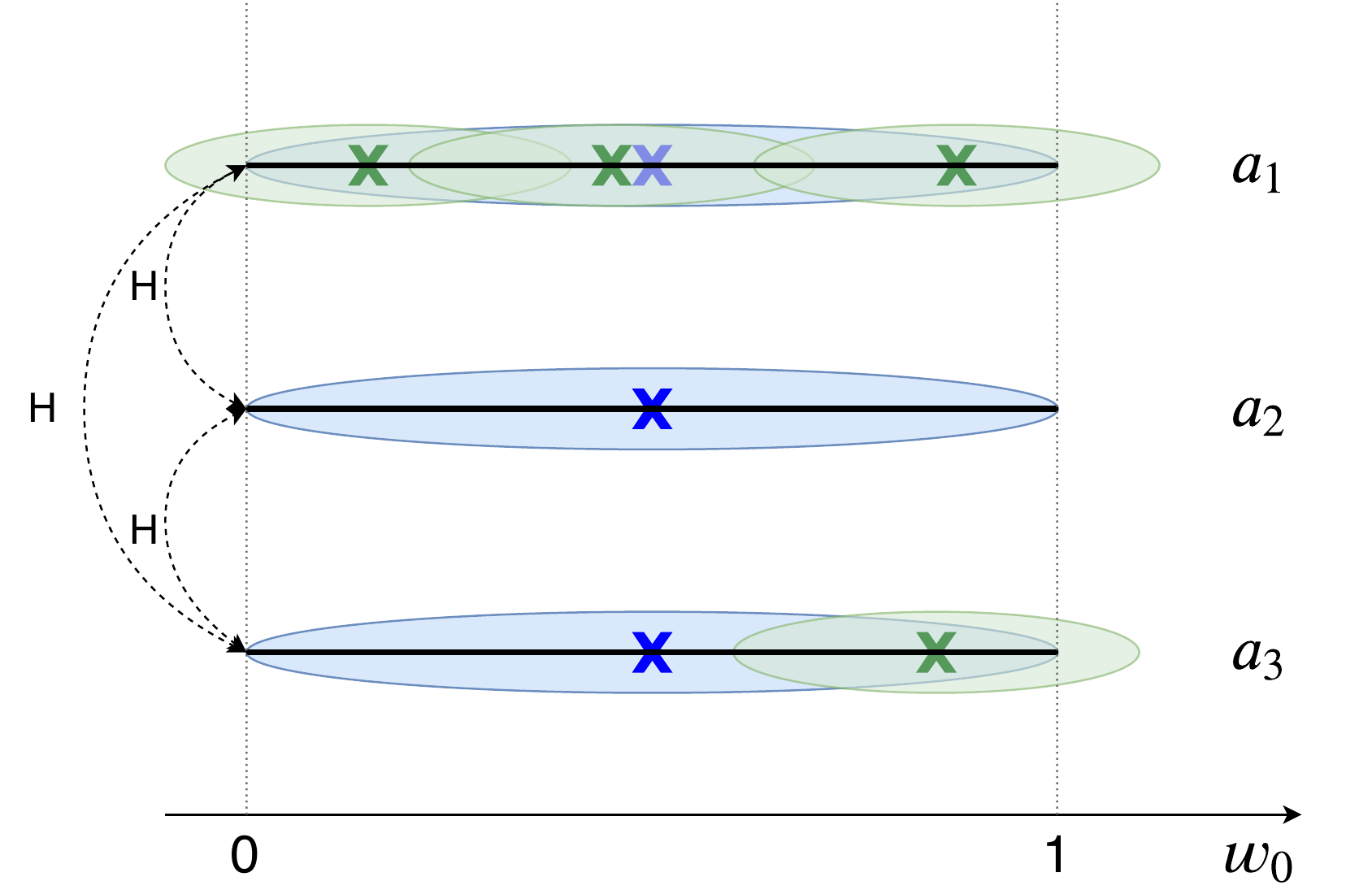}
    \caption{Visualisation of a snapshot of the CZ algorithm, running over three actions with two objectives, where $w_0$ is the context weighting for one objective.}
    \label{fig:cz}
\end{figure}

Extending UCB1 to handle a contextual MAB   can be hard, as  we have an uncountably infinite set of contexts, the weight vectors $\cl{W}_D$. If one has knowledge of a metric that satisfies  Equation~(\ref{eq:lipshitz}), the problem can be made more tractable. The metric allows contexts to be grouped (in sets with a fixed radius, i.e. balls), and maintain an average value over all contexts in the ball. Intuitively, smaller balls allow each context vector to have a more accurate value estimate maintained. These ideas underline the \textit{Contextual Zooming} (CZ) algorithm~\citep{slivkins2014contextual}, that modifies UCB1 to run over balls of contexts, and introduces balls of smaller radii as required. 

We present CZ  by defining the \textit{similarity space} (i.e. the metric over the context-arm space be $\cl{P}=\cl{W}_D\times \Pi$ satisfying Equation~(\ref{eq:lipshitz})); presenting CZ for \textit{policy selection}; and then adapting CZ to be used in  \texttt{selectAction} from THTS. A snapshot of CZ is visualised in Fig.~\ref{fig:cz}, with three actions. The initial three balls of radii one are blue, and balls of radii $\frac{1}{2}$ are green. $a_1$ has been covered by green balls, and will only use value estimates from these green balls, whereas $a_3$ will use the value of the one green ball that it has, \textit{if it is relevant}, and otherwise use the blue ball.

\textbf{Similarity Space.} Consider the linear contextual policy selection problem and let $d: \cl{W}_D\times \Pi \rightarrow \mathbb{R}_{\geq0}$ such that:
\begin{align}
    \d((\bm{w},\pi),(\bm{w}',\pi')) = \begin{cases}
        C(\pi) \Vert \bm{w} - \bm{w}' \Vert_\infty & \text{if } \pi = \pi', \\
        U & \text{otherwise}, 
    \end{cases}
\end{align}

\noindent where $U>0$ and $2U \geq C(\pi)\geq \Vert \bm{V}^{\pi} \Vert_{\infty}\geq 0$, i.e. $C(\pi)$ is a value that overestimates the infinity norm of the expected vector reward.
Furthermore, $U$ is an upper bound on the maximum scalarised reward that can be achieved in a single trial, i.e. $U \geq\sup_{\bm{w}\in \cl{W}_D,\pi\in\Pi} \bm{w}^\top \bm{V}^{\pi}$. For example, if $\bm{R}(s,a)\in [0,1]^D$ then these values can be set to $C(\pi)=U=H$. Additionally, observe the Lipshitz property (Equation (\ref{eq:lipshitz})) holds when $\pi\neq\pi'$ because $|\mu(\bm{w},\pi)-\mu(\bm{w}',\pi)|\leq U$ and it holds when $\pi=\pi'$ because:
\begin{align}
    |\mu(\bm{w},\pi)-\mu(\bm{w}',\pi)| 
        = & \Vert(\bm{w}^\top -\bm{w}'^\top) \bm{V}^{\pi}\Vert_\infty \\
        \leq & \Vert\bm{V}^{\pi}\Vert_\infty \Vert(\bm{w}^\top -\bm{w}'^\top) \Vert_\infty \\
        \leq & C(\pi) \Vert(\bm{w}^\top -\bm{w}'^\top) \Vert_\infty \\
        = & d((\bm{w},\pi),(\bm{w}',\pi)),
\end{align}

\noindent where in the first line we use the definition of $\mu$ and the fact that the modulus and infinity-norm operations are identical on scalar values. We also used the result that for any matrices $\Vert AB \Vert_\infty \leq \Vert A \Vert_\infty \Vert B\Vert_\infty$.

\textbf{Contextual Zooming.} CZ is an algorithm that achieves a contextual regret of $O(N^{1-1/(2+c)}\log N)$, where $c$ is the \textit{covering dimension}. The \textit{covering dimension} is related to how many balls of a fixed radius are needed to cover the similarity space. For a full explanation and derivation of the regret bound we refer the reader to~\citep{slivkins2014contextual}.

Throughout the algorithm a finite set of balls called the et of \textit{active balls} is maintained. Let $\cl{A}_k$ be the set of active balls at the start of trial $k$. For us, each ball corresponds to a set of context vectors and has an associated arm (i.e. policy). Whenever CZ needs to select an arm, it will find a set of \textit{relevant balls} in $\cl{A}_k$, compute an \textit{upper confidence bound} for each relevant ball and select the arm associated with the largest bound. The ball $B$ is \textit{relevant} for the weight vector $\bm{w}\in \cl{W}_D$ if there is some arm $\pi\in\Pi$ such that $(\bm{w},\pi)\in \mtt{dom}_k(B)$, where:
\begin{align}
    \mtt{dom}_k(B) = B - \left( \bigcup_{\{B'\in\cl{A}_k \ | \  r(B') < r(B)\}} B' \right),
\end{align}

\noindent with $r(B)$ the radius of $B$. $\mtt{dom}_k(B)$ is used to decide which balls are relevant to consider when making a choice for some context vector. The upper confidence bound $I_k(B)$ for each $B$ in the set of relevant balls during round $k$ is defined using the following equations:

\begin{align}
    \mtt{conf}_k(B) =& 4\sqrt{\frac{\log k}{1+n_k(B)}},  \label{eq:conf_radius} \\
    I^{\text{pre}}_k(B) =& \nu_k(B) + r(B) + \mtt{conf}_k(B), \\
    I_k(B)=&r(B)+\min_{B'\in\cl{A}_k} \left( I^{\text{pre}}_k(B')+\cl{D}(B,B') \right),
\end{align}

where $n_k(B)$ is the number of times that ball $B$ has been selected in the previous $k-1$ rounds, $\nu_k(B)$ is the average (scalarised) reward for ball $B$ in the previous $k-1$ rounds, $r(B)$ is the radius of the ball and $\cl{D}(B,B')$ is the distance between the centers of balls $B$ and $B'$. 


    




The algorithm then proceeds by repetitively applying the following two rules. 
\textit{Selection rule:} On round $k$ select the ball $B$, from those that are relevant, that has the maximum index $I_k(B)$. From that ball, select an arm arbitrarily such that $(\bm{w}_k,\pi)\in B$. 
\textit{Activation Rule.} If the ball $B$ that was selected satisfies $\mtt{conf}_k(B) \leq r(B)$ after this round, then a new ball is added to the active set, otherwise we set $\cl{A}_{k+1}=\cl{A}_k$. When adding a new ball, if $\pi$ was the arm selected by the selection rule in round $k$, then a new ball $B_{\text{new}}$ with center $(\bm{w}_k,\pi)$ is introduced with radius $\frac{1}{2}r(B)$, and we set $\cl{A}_{k+1}=\cl{A}_k \cup \{ B_{\text{new}} \}$.
To initialise the set of active balls we add one ball per arm $\pi$, with center $(\bar{\bm{w}},\pi)$ and radius $C(\pi)$ to $\cl{A}_0$, where $\bar{\bm{w}}=\frac{1}{d}\bm{1}$. 


\textbf{Contextual Zooming for Trees.}  Running CZ directly over policies is infeasible for MOMDPs, because the number of policies grows exponentially in the size of the state space, and additionally the distance metric does not allow two different policies to be close in the similarity space. Instead, we run CZ for the \texttt{selectAction} method in THTS at every decision node. So now we consider a new similarity space $(\cl{P}_s,d_s)$ associated with the state $s$ of the decision node, where $\cl{P}_s = \cl{W}_D \times A$, and $d_s$ is defined as:
\begin{align}
    d_s((\bm{w},a), (\bm{w}',a')) = \begin{cases}
        C_s(a) \Vert \bm{w} - \bm{w}' \Vert_\infty & \text{if } a = a', \\
        U & \text{otherwise}, 
    \end{cases}
\end{align}

\noindent where $C_s(a)=\sup_{\{\pi \ | \ \pi(s)=a\}} C(\pi)$. Given this similarity space, the CZ algorithm operates as before, using actions for the arms instead of policies in the contextual MAB problem. Note that for each decision node we, in fact, have a \textit{non-stationary} contextual multi-armed bandits problem, similar to~\citep{kocsis2006improved}. We refer to this action selection method as \textit{Contextual Zooming for Trees} (CZT), and when we use CZT for action selection in CHMCTS we call the algorithm \textit{Zooming CHMCTS}.

\section{Experiments}

To validate Zooming CHMCTS we will evaluate its performance on a variable-sized grid world problem, the \textit{Generalised Deep Sea Treasure} (GDST) problem~\citep{vamplew2011empirical,vamplew2017morl}. Moreover, we will consider its performance in both online and offline planning, where in online planning we assume that the agent follows each policy selected, and we want to maximise the cumulative payoff over many trials. In contrast, in offline planning, we do not care about the performance during the planning phase, but only the quality of policies that can be extracted afterwards.

\subsection{Experimental Setup}

In the GDST($c$,$p$) problem we consider a two-dimensional grid world, consisting of $c$ columns and a transition noise of $p\in[0,1]$. The submarine can move left, right, up or down each time step, with the submarine remaining stationary if it would otherwise leave the grid world. The submarine starts in the top left corner on each trial. On every timestep, transition noise $p$ indicates the probability that the submarine is instead swept by a current, moving it in a random direction. The seafloor becomes increasingly deep from left to right (with depth increasing by a small amount between zero and three inclusive for each column), but also holds increasing amounts of treasure at greater depths, ranging in values from one to 1000. There are two objectives, one is to collect the maximal amount of treasure, while the other is to minimise the time cost of reaching the treasure, where a cost of one is incurred for each timestep. Each trial concludes after either $100c$ steps or as soon as the submarine arrives at a piece of treasure. A visualisation of an environment for $c=5$ is given in Fig. \ref{fig:gdst}. During planning, we normalise rewards to the range $[0,1]$ to give equal priority over the different objectives.  

\begin{figure}
    \centering
    \includegraphics[scale=0.45]{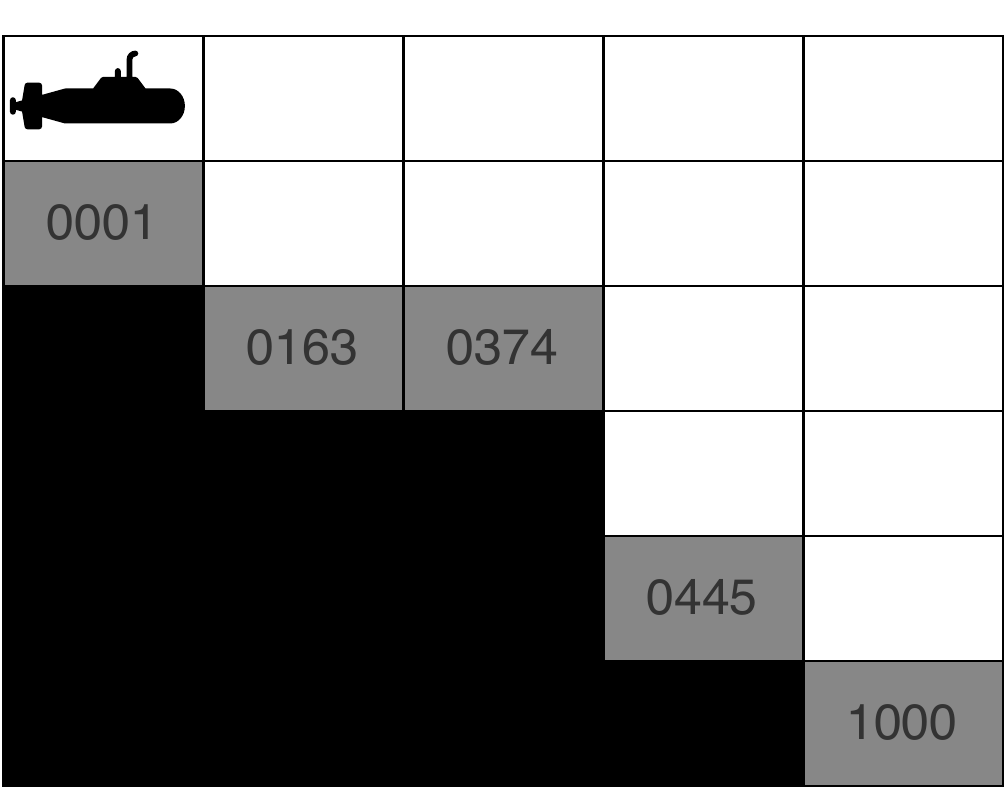}
    \caption{A visualisation of a GDST(5,p) problem, gray squares represent the treasure that can be obtained.}
    \label{fig:gdst}
\end{figure}

\subsection{Practical Considerations} \label{sec:optimizations}

Because the CHVI backups from Equations (\ref{eqn:chvi_v}) and (\ref{eqn:chvi_q}) can be computationally intensive, optimisations can be made to minimise the number of these backup operations performed. Where appropriate we use these following optimisations.

Firstly, \citet{lizotte2010efficient,lizotte2012linear} demonstrate that performing the relevant computations in \textit{weight space}, rather than \textit{value space}, is computationally more efficient. 
We additionally follow the \textit{backup optimisation} considered by \citet{perez2015multiobjective}.
We can identify when either a \texttt{backupChanceNode} or a \texttt{backupDecisionNode} does not change the value at a node, and subsequently prune the remaining backups for the trial. Also, it is possible to incorporate the ideas from UCD \citet{saffidine2012ucd} that allow nodes to be re-used when a state is re-visited.
Finally, for offline planning, we can use the idea of \textit{labelling} nodes, where a node is labelled if its value has converged. Nodes can be labelled when they are backed up and all of their children are labelled. Leaf nodes are always labelled. By using labelling, we can avoid searching parts of the tree that have already converged.

\subsection{Alternative Action Selection Methods}
We now state some other action selection methods used in the literature. Let $s$ be the state of the decision node that an action is being selected at, which has been visited $N(s)$ times. For any action $a$, let $N(s,a)$ be the number of times that the corresponding child node has been visited and let $\set{Q}(s,a)$ denote the Pareto front stored in the child node. Most prior works use an exploration policy of the form:
\begin{align}
    \nu(s;\bm{w}) = \argmax_{a}\ \zeta(\set{Q}(s,a); \bm{w}) + C\sqrt{\frac{\log(N(s))}{N(s,a)}}, 
\end{align}
where $C$ is some appropriate constant and $\zeta$ is some scalarization function that maps a Pareto front to a scalar value. \citet{perez2013online,perez2015multiobjective,perez2016multi} set 
\begin{align}
    \zeta(\set{Q}(s,a);\bm{w})=\frac{HV(\set{Q}(s,a),\mathbf{o})}{N(s)},
\end{align}

\noindent where $HV$ is the hypervolume function (Definition \ref{def:hypervolume}). This action selection is \textit{context-free} and we refer to the algorithm that results from this exploration policy as \textit{Hypervolume CHMCTS}. \citet{xu2017chebyshev} use the \textit{Chebychev scalarization} function instead, setting:
\begin{align}
    \zeta(\set{Q}(s,a);\bm{w})=\min_{\mathbf{p}\in \set{Q}(s,a)} \max\limits_i w_i |p_i - z_i|, \label{eq:chebychev_scalarization}
\end{align}

\noindent where $\bm{z}=(z_1,\hdots,z_D)$ is called the \textit{utopian point}, defined by $z_i = \max_\pi V^\pi_i$. When we use this action selection we call the algorithm \textit{Chebychev CHMCTS}. This action selection is actually \textit{context-aware}, and \citet{xu2017chebyshev} select the weight vector $\bm{w}$ randomly each trial. 

The ParetoUCB1 algorithm is used by \citet{chen2019pareto} for action selection in a MOMAB problem.
In our approach, we run ParetoUCB1 over the set $\bigcup_{a\in A} \set{Q}(s,a)$, rather than using vectors from child nodes as in point-based MOMCTS. When using this action selection, we refer to our algorithm as \textit{Pareto CHMCTS}.

\subsection{Regret Comparison} 

\begin{figure} 
    \centering
    \includegraphics[scale=0.4]{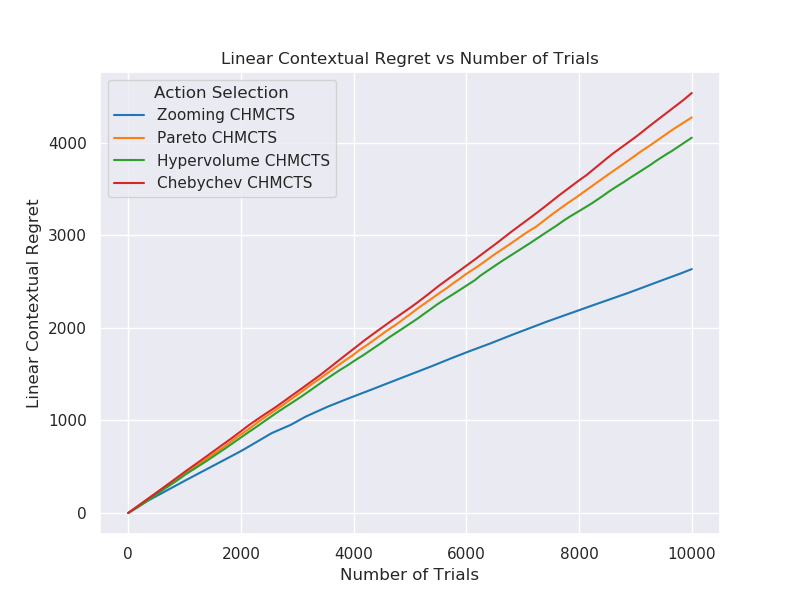}
    \caption{The LCR $\mtt{reg}_U(N)$ with respect to the number of trials $N$ of different CHMCTS algorithms. Each algorithm was run  five times. }
    \label{fig:regret_curves}
\end{figure}

In this section, we consider a GDST(7,0.01) instance, which is small enough for us to feasibly compute the true CCS, and thus the regret is computable. Fig. \ref{fig:regret_curves} shows a plot of the cumulative LCR over 100,000 trials, on the GDST(7,0.01) instance. As we can see in the figure, only Zooming CHMCTS manages to achieve a sublinear regret. These results are consistent with Theorem~\ref{thrm:one}, with context-free action selection methods suffering a linear regret. Interestingly, Chebychev CHMCTS suffers a linear regret too, despite being context-aware. From these results, we can hypothesise that a regret bound could be found for CZT. This demonstrates that Zooming CHMCTS outperforms all other variants for online planning (i.e. in the contextual policy selection problem).

\subsection{Comparing Action Selections}

\begin{figure}
    \centering
    \includegraphics[scale=0.4]{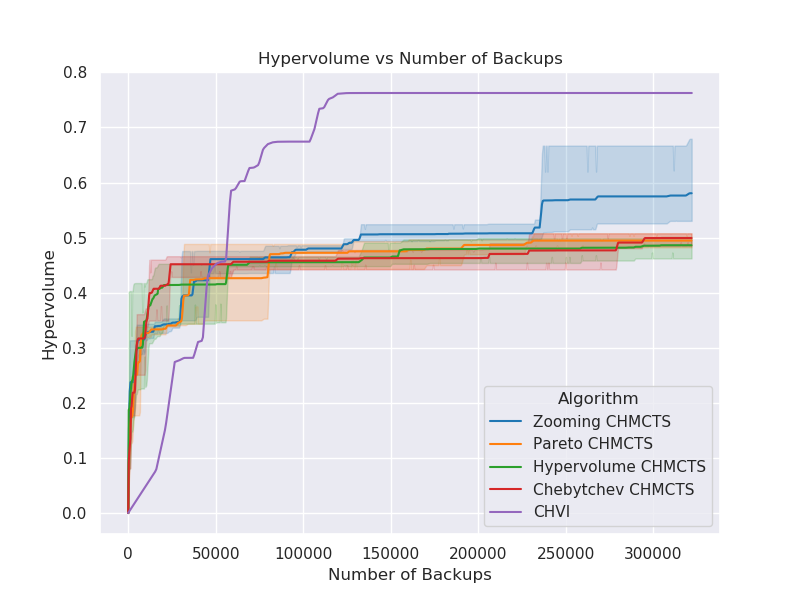}
    \caption{The hypervolume with respect to the number of CHVI operations. We compare each variant of CHMCTS with the baseline CHVI. 95\% Confidence intervals are plotted from 5 runs each.}
    \label{fig:hv_curves}
\end{figure}

To compare CHMCTS algorithms for offline planning, we compare the hypervolume of the CCS estimated in the root node, with respect to the number of backups required. Hypervolume is considered a suitable metric for the quality of a CCS~\citep{vamplew2011empirical}. We compare algorithms on a GDST(30,0.01) environment, using CHVI as a baseline in Fig.~\ref{fig:hv_curves}. 
We note that CHVI will compute the optimal CCS in $|S|\cdot H$ many backups if $S$ is the state space and $H$ the horizon. $|S|\cdot H$ is the least number of backups required to guarantee that the optimal value has been computed, so we expect CHVI to converge first. We see that Zooming CHMCTS continues to make steady progress towards the optimal hypervolume, while the other methods appear to quickly plateau. Also, we can see that CHMCTS algorithm would outperform CHVI if given a small computational budget. In the next section, we will also see that as the environment size is increased the performance of CHVI severely deteriorates.

\subsection{Scalability Analysis}

\begin{figure}
    \centering
    \includegraphics[scale=0.4]{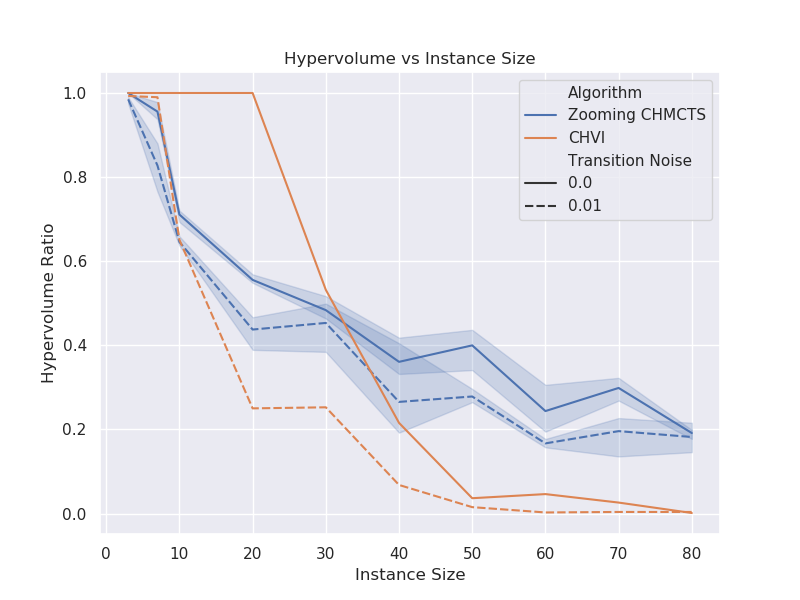}
    \caption{The approximate ratio of the hypervolume found, $\frac{\mtt{ehv(c,p)}}{\mtt{hv(c,0)}}$, with respect to problem size, i.e. $\mtt{c}$, with a budget of 25000 backups. 95\% Confidence intervals are plotted from 3 runs each.}
        \label{fig:scalability_curves}
\end{figure}

To understand how scalable CHMCTS is, we compare Zooming CHMCTS with CHVI on a range of different sized environments. In Fig.~\ref{fig:scalability_curves} we estimate how much of the true CCS is discovered given a budget of 25000 backups. We vary $c$ between three and 80, for both $p=0$ and $p=0.01$. 

Let $\mtt{hv(c,p)}$ denote the optimal hypervolume of a GDST(c,p) instance, and let $\mtt{ehv(c,p)}$ be the estimated hypervolume resulting from some algorithm on the same GDST(c,p) instance. The ratio of $\frac{\mtt{ehv(c,p)}}{\mtt{hv(c,p)}}$ indicates what proportion of the CCS was found by the algorithm. However, $\mtt{hv(c,p)}$ is infeasible to compute for $\mtt{p}>0$. Therefore we instead plot the lower bound $\frac{\mtt{ehv(c,p)}}{\mtt{hv(c,0)}} \leq \frac{\mtt{ehv(c,p)}}{\mtt{hv(c,p)}}$, which for GDST instances will always be in the range $[0,1)$.

In Fig.~\ref{fig:scalability_curves} we see that CHMCTS outperforms CHVI on instances with $c\geq 40$. We only compare Zooming CHMCTS because of computational constraints. Our results suggest that in the presence of a budget on the number of backups, there will be some threshold in the sizes of MOMDPs from which  CHMCTS  outperforms CHVI.

\section{Conclusion}

In this work, we posed planning in Multi-Objective Markov Decision Processes as a \textit{contextual multi-armed bandits problem}. We then discussed  why one should maintain approximations of the Pareto front in each tree node, to produce the Convex Hull Multi-Objective Tree-Search framework. By considering contextual regret, we introduced a novel action selection method and empirically verified it can achieve a sublinear regret and outperforms other state-of-the-art approaches.

Future work includes proving regret bounds of Multi-Objective Monte-Carlo Tree-Search algorithms and applying these algorithms in larger, real-world settings.

\section*{Acknowledgements}

This work is funded by Bossa Nova Robotics, Inc located at 610 22nd Street, Suite 250, San Francisco, CA 94107, USA,  UK Research and Innovation and EPSRC through the Robotics and Artificial Intelligence for Nuclear (RAIN) research hub [EP/R026084/1], and  the European Union Horizon 2020 research and innovation programme under grant agreement No 821988 (ADE).

\balance
\bibliography{painter}
\bibliographystyle{aaai}

\clearpage
\appendix

\setcounter{thrm}{0}
\setcounter{figure}{7}
\setcounter{equation}{28}

\section{Trial-Based Heuristic Tree Search} \label{app:thts}

\begin{figure*}
	\centering
	\includegraphics[width=\textwidth]{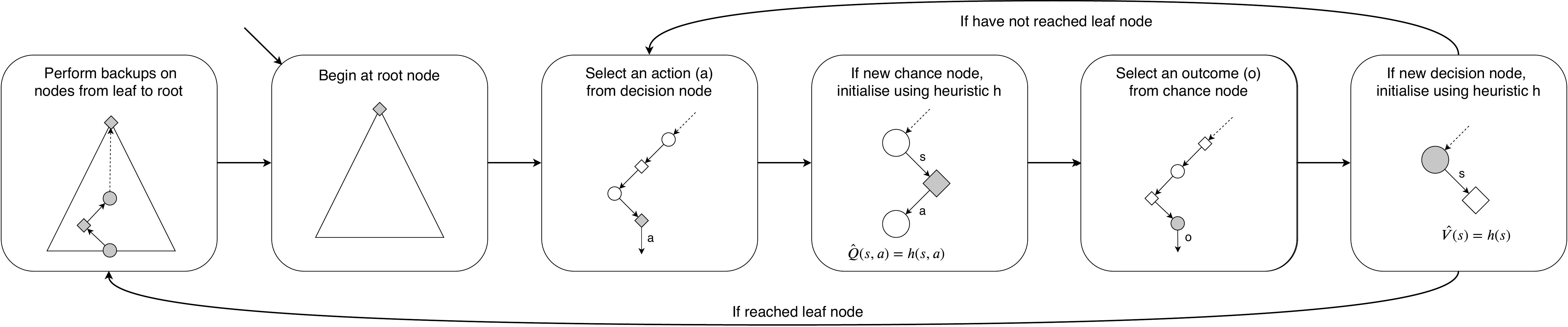}
	\caption{A visualization of the THTS schema.}
	\label{fig:thts}
\end{figure*}

In this appendix we give a brief overview of each of the subroutines used in THTS~\citep{keller2013trial} and what their purpose it, with psuedocode (Algorithm 1) and visualisation (Figure 8).

In Algorithm \ref{alg:thts} we give psuedocode for the THTS algorithm, which is split into three subroutines and presented as recursive functions for conciseness. In the loop starting on line \ref{line:outer_trials_loop}, we could easily add a function (say \texttt{generateContext}) that samples as context weight vector and passes it to the subroutines \texttt{thtsDecisionNode} and \texttt{thtsChanceNode}, to allow for contextual action selection.

The \texttt{initialiseNode} function provides an initial estimate of the value of a node, for example, this could just return zero for all nodes, or, this could incorporate a Monte-Carlo tree-search style rollout policies~\citep{chaslot2008monte}. 

The \texttt{visitDecisionNode} and \texttt{visitChanceNode} functions will update state stored in the chance or decision nodes. For example, in an implementation of UCT, the counts for how many times the node has been visited will be updated.

The \texttt{selectAction} and \texttt{selectOutcome} functions will select the next node to consider during a trial. So \texttt{selectAction} will be given a decision node $n_s$, and it will chose an action $a$, indicating $n_{sa}$ is the next node to visit. Similarly, \texttt{selectOutcome} will be given a chance node $n_{sa}$ and chose a successor state $s'$, indicating the next node to visit is $n_{s'}$. 

Finally, the \texttt{backupDecisionNode} and \texttt{backupChanceNode} functions correspond to updating the value estimate at each node from its successor nodes. For example, in a single-objective setting the value will be a scalar and could be updated using a \textit{Bellman backup} or \textit{Monte-Carlo backup}. In the multi-objective setting, our value associated with each node can be either a convex hull or Pareto front. 

\begin{algorithm}[tb]
    \caption{\texttt{THTS}}
    \label{alg:thts}
    \begin{algorithmic}[1]
        \Procedure{\texttt{THTS}}{MDP $M$, Timeout $T$}
            \State $n_0 \leftarrow \mtt{getRootNode}(M)$
            \While{$\mtt{notConverged}(n_0)$ and time $< T$} \label{line:outer_trials_loop}
                \State $\mtt{thtsDecisionNode}(n_0)$
            \EndWhile
        \EndProcedure \\
        
        \Procedure{\texttt{thtsDecisionNode}}{Node $n_d$}
            \If{$n_d$ uninintialised}
                \State $\mtt{initialiseNode}(n_d)$
            \EndIf
            \State $\mtt{visitDecisionNode}(n_d)$
            \State $N\leftarrow \mtt{selectAction}(n_d)$
            \For{$n_c$ in $N$}
                \State $\mtt{thtsChanceNode}(n_c)$
            \EndFor
            \State $\mtt{backupDecisionNode}(n_d)$
        \EndProcedure \\
        
        \Procedure{\texttt{thtsChanceNode}}{Node $n_c$}
            \If{$n_c$ uninintialised}
                \State $\mtt{initialiseNode}(n_c)$
            \EndIf
            \State $\mtt{visitChanceNode}(n_c)$
            \State $N\leftarrow \mtt{selectOutcome}(n_c)$
            \For{$n_d$ in $N$}
                \State $\mtt{thtsDecisionNode}(n_d)$
            \EndFor
            \State $\mtt{backupChanceNode}(n_c)$
        \EndProcedure 
    \end{algorithmic}
\end{algorithm}

\section{Proof of Theorem 1} \label{app:proof}

\begin{thrm}
    For any sequence of context-free policies $\{\sigma_i\}_{i=1}^N$, for some MOMDP $M$, the expected ULCR over $N$ trials is $\Omega(N)$. 
\end{thrm}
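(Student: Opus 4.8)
The plan is to exhibit a single fixed MOMDP for which every context-free exploration policy must, in expectation, incur a constant amount of linear contextual regret per trial, and then sum over the $N$ trials. I would use the two-dimensional, horizon-one MOMDP of Figure~\ref{fig:proof_momdp1}: from $s_0$ there are two actions, $a_1$ leading deterministically to a state with reward $(1,0)$ and $a_2$ leading deterministically to a state with reward $(0,1)$ (so the true CCS at $s_0$ is $\{(1,0),(0,1)\}$). The key structural fact is that a context-free exploration policy $\sigma_k$ selects the same action at $s_0$ regardless of the sampled weight $\bm{w}_k$; since the action chosen may still depend on $k$ (and on the internal randomness/history of the algorithm), I would argue per-trial: on trial $k$, condition on the action $a_j$ that $\sigma_k$ commits to at $s_0$, which is a random variable independent of $\bm{w}_k$ because $\sigma_k$ is context-free and $\bm{w}_k$ is drawn freshly and uniformly from $\cl{W}_2$.

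Next I would compute the conditional expected LCR on trial $k$. Writing $\bm{w}_k=(w,1-w)$ with $w$ uniform on $[0,1]$, the optimal scalarised value is $\max\{w,\,1-w\}$, while choosing $a_1$ gives $w$ and choosing $a_2$ gives $1-w$. Hence the one-trial regret when $a_1$ is played is $\max\{w,1-w\}-w = \max\{0,\,1-2w\}$, whose expectation over $w\sim U[0,1]$ is $\int_0^{1/2}(1-2w)\,dw = \tfrac14$; by symmetry the same holds for $a_2$. Therefore, whatever action $\sigma_k$ commits to, the expected contextual regret on trial $k$ — taken over the randomness of $\bm{w}_k$ and, by the tower property, over any internal randomness of $\sigma_k$ — is exactly $\tfrac14$. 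Summing, $\mathbb{E}[\mtt{reg}_C(N,\{\bm{w}_k\})] = \sum_{k=1}^N \tfrac14 = \tfrac{N}{4} = \Omega(N)$, and since CHMCTS-style algorithms that use only context-free action selection produce such sequences $\{\sigma_k\}$, the claim follows.

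The main obstacle is not the arithmetic but making the independence step airtight: one must be careful that "context-free" only forbids dependence of $\sigma_k$ on the \emph{current} weight $\bm{w}_k$, and that the adversary/algorithm is allowed to use the history $\cl{H}$ and past weights $\bm{w}_1,\dots,\bm{w}_{k-1}$; the argument goes through because $\bm{w}_k$ is sampled independently of everything preceding trial $k$, so conditioning on the entire history up to the start of trial $k$ leaves $\bm{w}_k$ uniform and leaves $\sigma_k$'s chosen action at $s_0$ fixed. A secondary point worth stating explicitly is that this lower bound holds for \emph{any} context-free sequence, including adaptive ones, precisely because the per-trial bound of $\tfrac14$ did not depend on which action was chosen — so no clever alternation between $a_1$ and $a_2$ across trials can help. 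I would also remark that the construction generalises to any $D\geq 2$ and to rewards scaled to $[0,1]^D$ with the constant degrading only by a factor depending on $D$ and $H$, but a single clean instance suffices for the $\Omega(N)$ statement.
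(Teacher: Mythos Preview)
Your proposal is correct and essentially identical to the paper's proof: both use the same horizon-one, two-action MOMDP with rewards $(1,0)$ and $(0,1)$ (up to a relabelling of $a_1,a_2$), compute the per-trial expected regret as $\int_0^{1/2}(1-2w)\,dw=\tfrac14$ for either committed action, and sum to $N/4=\Omega(N)$. Your explicit handling of the independence between $\bm{w}_k$ and the (possibly adaptive, history-dependent) choice made by $\sigma_k$ is a welcome clarification that the paper's proof leaves implicit.
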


\begin{proof}

    Consider the MOMDP $M$ defined in Figure 9, with a horizon of one, and only two possible policies. Let $\{\bm{w}_k\}_{k=1}^N$ with $w_k=(\lambda_k,1-\lambda_k)$ sampled uniformly from $S_2=\{(\lambda,1-\lambda) | \lambda\in[0,1]\}$. Let $\pi_k=\sigma_k(\cdot;\bm{w}_k)$ be the policy selected on the $k$th trial, and let $\pi_k^*$ be the optimal policy for the $k$th trial. 
    
    Considering the two cases for what $\pi_k$ could be, we have: if $\pi_k(s_0)=a_1$, then $\sup_{\pi'} \bm{w}_k^\top (\bm{V}^{\pi'} - \bm{V}^{\pi_k}) = \max \{0,2\lambda_k-1\}$, and similarly, if $\pi_k(s_0)=a_2$, then $\sup_{\pi'} \bm{w}_k^\top (\bm{V}^{\pi'} - \bm{V}^{\pi_k}) = \max \{0,1-2\lambda_k\}$. Observing that 
    \begin{align}
        \int_0^1 \max \{0,2\lambda-1\} \d{\lambda} = \int_0^1 \max \{0,1-2\lambda\} \d{\lambda} = \frac{1}{4},
    \end{align}
    
    we must have $\mathbb{E}_{\bm{w}_k}[\sup_{\pi'} \bm{w}_k^\top (\bm{V}^{\pi'} - \bm{V}^{\pi_k})] = \frac{1}{4}$. Finally, summing over all terms in the definition of ULCR gives the result:
    \begin{align}
        \mtt{reg}_U(N) 
        =& \mathbb{E}_{\cl{W}}\left[ \sum_{k=1}^N \sup_{\pi'} \bm{w}_k^\top \bm{V}^{\pi'} - \bm{w}_k^\top \bm{V}^{\pi_k} \right] \\
        =& \sum_{k=1}^N \mathbb{E}_{\bm{w}_k} \left[ \sup_{\pi'} \bm{w}_k^\top \bm{V}^{\pi'} - \bm{w}_k^\top \bm{V}^{\pi_k} \right] \\
        =& \sum_{k=1}^N \frac{1}{4}  \\
        =& \Omega(N).
    \end{align}
\end{proof}

\begin{figure}
    \centering
    \includegraphics[scale=0.4]{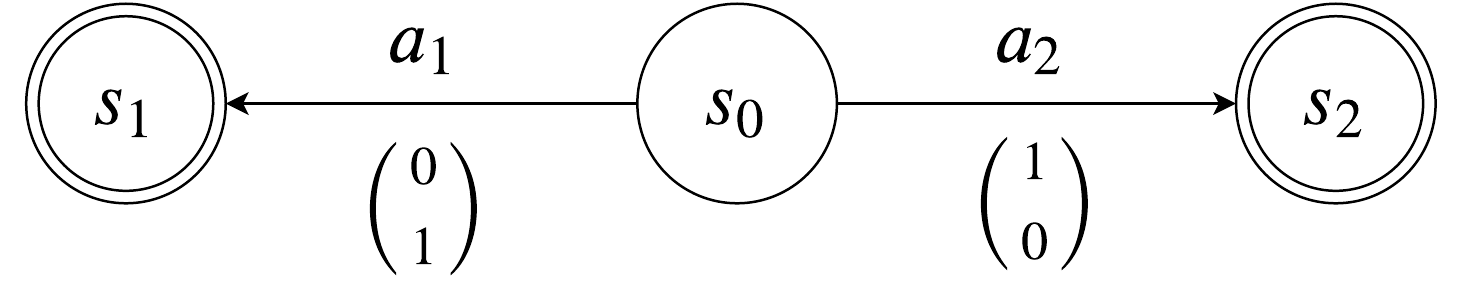}
    \caption{A deterministic MOMDP with three states, two-dimensional rewards and a horizon of one. Terminal states are marked with a double ring and the initial state is $s_0$. All actions are labelled with an action name, and the reward associated with the state-action pair. As all probabilities are one, probabilities are omitted from this figure.}
    \label{fig:proof_momdp2}
\end{figure}

\section{Proof that $\cl{D}$ is a Metric}

For completeness, we show in this section that the function $d:\cl{P}\times\cl{P}\rightarrow [0,\infty)$ is a metric on the similarity space $\cl{P}=S_D\times \Pi$. Recall that $D$ is defined in Equation (16) as follows:
\begin{align}
    d((\bm{w},\pi),(\bm{w}',\pi')) = \begin{cases}
        C(\pi) \Vert \bm{w} - \bm{w}' \Vert_\infty & \text{if } \pi = \pi', \\
        U & \text{otherwise}, 
    \end{cases} \tag{16}
\end{align}
where $U>0$ and $2U \geq C(\pi)\geq \Vert \bm{V}^{\pi} \Vert_{\infty}\geq 0$ for all $\pi\in\Pi$.

\begin{defn}
	The function $m:X\times X\rightarrow [0,\infty)$ is called a \textnormal{metric} on the set $X$ if the following properties hold for all $x,y,z\in X$:
	\begin{align}
		m(x,y) &\geq 0 					\tag{m1} \\
		m(x,y) &= 0 \iff x=y 			\tag{m2} \\
		m(x,y) &= m(y,x)					\tag{m3} \\
		m(x,y) &\leq m(x,z) + m(z,x). 	\tag{m4}
	\end{align}
\end{defn}

\begin{thrm}
	The function $d$ defined by Equation (16) is a metric on $\cl{P}=S_D\times \Pi$.
\end{thrm}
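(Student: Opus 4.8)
The plan is to verify the four metric axioms (m1)--(m4) in turn, in each case splitting on the branch of the definition of $d$ ($\pi = \pi'$ versus $\pi \neq \pi'$). Axioms (m1) and (m3) are essentially immediate: in the first branch $d((\bm{w},\pi),(\bm{w}',\pi')) = C(\pi)\Vert\bm{w}-\bm{w}'\Vert_\infty$ is a product of the non-negative quantities $C(\pi)\geq 0$ and $\Vert\bm{w}-\bm{w}'\Vert_\infty\geq 0$, and in the second branch it equals $U>0$, which gives (m1); symmetry (m3) holds because the relation $\pi=\pi'$ is symmetric, $C(\pi)=C(\pi')$ when $\pi=\pi'$, and $\Vert\cdot\Vert_\infty$ is symmetric, while the second branch is constant. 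For (m2), if $(\bm{w},\pi)=(\bm{w}',\pi')$ then $\pi=\pi'$ and $\bm{w}=\bm{w}'$, so $d = C(\pi)\cdot 0 = 0$; conversely $d=0$ excludes the second branch (since $U>0$), forces $\pi=\pi'$, and then $C(\pi)\Vert\bm{w}-\bm{w}'\Vert_\infty = 0$ yields $\bm{w}=\bm{w}'$ provided $C(\pi)>0$. I would flag at this point that one needs $C(\pi)>0$, not merely $C(\pi)\geq 0$, for $d$ to be a genuine metric rather than a pseudometric (if some $\bm{V}^\pi=\bm{0}$ one could otherwise take $C(\pi)=0$); this is harmless, since the only constraint is $2U\geq C(\pi)\geq\Vert\bm{V}^\pi\Vert_\infty$, and one may always choose $C(\pi)$ strictly positive.

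The substance of the proof is the triangle inequality (m4) for $x=(\bm{w},\pi)$, $y=(\bm{w}',\pi')$, $z=(\bm{w}'',\pi'')$, which I would handle by a case analysis on the equality pattern of $\pi,\pi',\pi''$ relative to the pair $(x,y)$ being bounded. \textbf{Case (i):} $\pi=\pi'=\pi''$. Then all three distances have the form $C(\pi)\Vert\cdot\Vert_\infty$ with the \emph{same} constant, so the claim is just $C(\pi)\geq 0$ times the triangle inequality for $\Vert\cdot\Vert_\infty$ on $\cl{W}_D$. \textbf{Case (ii):} $\pi=\pi'$ but $\pi''\neq\pi$. Then $d(x,y)=C(\pi)\Vert\bm{w}-\bm{w}'\Vert_\infty$ while $d(x,z)+d(z,y)=U+U=2U$; here I would use that any $\bm{w},\bm{w}'\in\cl{W}_D$ satisfy $\Vert\bm{w}-\bm{w}'\Vert_\infty\leq 1$ (each coordinate lies in $[0,1]$), so $d(x,y)\leq C(\pi)\leq 2U$. \textbf{Case (iii):} $\pi\neq\pi'$. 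Then $d(x,y)=U$, and at least one of $d(x,z),d(z,y)$ must equal $U$ — otherwise $\pi=\pi''=\pi'$, contradicting $\pi\neq\pi'$ — so the right-hand side is at least $U+0=U$. These three cases are exhaustive, so (m4) holds in general.

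The only genuinely non-routine ingredient is Case (ii): it is exactly there that the hypothesis $C(\pi)\leq 2U$, combined with the bound $\Vert\bm{w}-\bm{w}'\Vert_\infty\leq 1$ coming from $\bm{w},\bm{w}'\in\cl{W}_D$, is used, and this is what forces the constant $2U$ into the definition of the similarity space. Everything else is bookkeeping. I would therefore dispatch (m1), (m3), (m2) in a couple of lines apiece and devote the bulk of the write-up to the case split for (m4), calling out Case (ii) as the step where the structure of the weight simplex and the choice of constants actually matter.
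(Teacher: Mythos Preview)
Your proposal is correct and follows essentially the same route as the paper: verify (m1)--(m3) by inspection on the two branches, then establish (m4) by the same three-case split on the equality pattern of the policies (your Cases (i)--(iii) are the paper's (m4c), (m4b), (m4a) in reverse order), with Case (ii) singled out as the place where $C(\pi)\leq 2U$ and $\Vert\bm{w}-\bm{w}'\Vert_\infty\leq 1$ on $\cl{W}_D$ are used. Your flag about needing $C(\pi)>0$ (not merely $\geq 0$) for the identity of indiscernibles is a genuine observation that the paper's write-up glosses over --- it concludes (m2b) from $C(\pi)\geq 0$ alone, which is not quite enough --- so on that point you are being more careful than the original.
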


\begin{proof}
	Let $(\bm{w}_1,\pi_1),(\bm{w}_2,\pi_2),(\bm{w}_3,\pi_3)\in \cl{P}$. We will show that each of the properties (m1), (m2), (m3) and (m4) hold for $d$.
	
	\textbf{m1}: follows immediately from the assumptions $U>0$, $C(\pi)\geq 0$ and because $\Vert \bm{x} \Vert_{\infty} \geq 0$ for any $\bm{x}\in\mathbb{R}^D$.
	
	\textbf{m2}: it is logically equivalent to show that $(\bm{w}_1,\pi_1)=(\bm{w}_2,\pi_2) \implies d((\bm{w}_1,\pi_1),(\bm{w}_2,\pi_2))=0$~(m2a) and $(\bm{w}_1,\pi_1)\neq (\bm{w}_2,\pi_2) \implies d((\bm{w}_1,\pi_1),(\bm{w}_2,\pi_2))\neq 0$~(m2b) both hold. 
	
	\textbf{m2a}: if we are given that $(\bm{w}_1,\pi_1)=(\bm{w}_2,\pi_2)$ then:
	\begin{align}
		d((\bm{w}_1,\pi_1),(\bm{w}_2,\pi_2)) 
			&= C(\pi_1) \Vert \bm{w}_1 - \bm{w}_2 \Vert_\infty \\
			&= C(\pi_1) \Vert \bm{0} \Vert_\infty \\
			&= 0.
	\end{align}
	
	\textbf{m2b}: firstly, if $\pi_1 \neq \pi_2$, then we must have that $d((\bm{w}_1,\pi_1),(\bm{w}_2,\pi_2))=U>0$, because $U>0$ by assumption. Now consider the case when $\pi_1 = \pi_2$ and $\bm{w}_1\neq \bm{w}_2$, there must be some index $i$ such that $(\bm{w}_1)_i \neq (\bm{w}_2)_i$ and $|(\bm{w}_1)_i - (\bm{w}_2)_i| > 0$. Therefore, we must have that $\Vert \bm{w}_1 - \bm{w}_2 \Vert_\infty = \max_j |(\bm{w}_1)_j - (\bm{w}_2)_j| \geq |(\bm{w}_1)_i - (\bm{w}_2)_i| > 0$. Combining this with the assumption that $C(\pi)\geq 0$ for all $\pi\in\Pi$ gives the result.
	
	\textbf{m3}: if $\pi_1\neq \pi_2$, then immediately we have $d((\bm{w}_1,\pi_1),(\bm{w}_2,\pi_2)) = U = d((\bm{w}_2,\pi_2),(\bm{w}_1,\pi_1))$. When $\pi_1 = \pi_2$, m3 follows from the fact that $\Vert \bm{x} \Vert_{\infty} = \Vert -\bm{x} \Vert_{\infty}$ for any $\bm{x}\in\mathbb{R}^D$ and using $C(\pi_1)=C(\pi_2)$:
	\begin{align}
		d((\bm{w}_1,\pi_1),(\bm{w}_2,\pi_2)) 
			&= C(\pi_1) \Vert \bm{w}_1 - \bm{w}_2 \Vert_\infty \\
			&= C(\pi_1) \Vert \bm{w}_2 - \bm{w}_1 \Vert_\infty \\
			&= C(\pi_2) \Vert \bm{w}_2 - \bm{w}_1 \Vert_\infty \\
			&= d((\bm{w}_2,\pi_2),(\bm{w}_1,\pi_1)). 
	\end{align}
	
	\textbf{m4}: we consider three cases for this property, when $\pi_1 \neq \pi_2$~(m4a), $\pi_1 = \pi_2 \neq \pi_3$~(m4b) and $\pi_1 = \pi_2 = \pi_3$~(m4c). 
	
	\textbf{m4a}: in this case we must have one of $\pi_3\neq\pi_1$ or $\pi_3\neq\pi_2$, so assume that $\pi_3\neq\pi_1$ wlog. Given this, by the definition of $d$ (Equation (16)) and then by (m1), we have $d((\bm{w}_1,\pi_1),(\bm{w}_2,\pi_2)) = U = d((\bm{w}_1,\pi_1),(\bm{w}_3,\pi_3)) \leq d((\bm{w}_1,\pi_1),(\bm{w}_3,\pi_3)) + d((\bm{w}_3,\pi_3),(\bm{w}_2,\pi_2))$.
	
	\textbf{m4b}: one can show that we must have that $\Vert \bm{w}_1 - \bm{w}_2 \Vert_\infty \leq 1,$\footnote{Consider that for any $i$ the largest possible value of $\vert (\bm{w}_1)_i - (\bm{w}_2)_i\vert$ is one, because $\bm{w}_1,\bm{w}_2\in \cl{W}_D$.} then we must have that:
	\begin{align}
		d((\bm{w}_1,\pi_1),(\bm{w}_2,\pi_2)) 
			=& C(\pi_1) \Vert \bm{w}_1 - \bm{w}_2 \Vert_\infty \\ 
			\leq& C(\pi_1) \leq 2U = U + U \\
			=& d((\bm{w}_1,\pi_1),(\bm{w}_3,\pi_3)) \\ 
				&+ d((\bm{w}_3,\pi_3),(\bm{w}_2,\pi_2))
	\end{align}
	where the assumptions that $C(\pi_1)\leq 2U$, $\pi_1\neq \pi_2$ and $\pi_1\neq \pi_3$ are used. 
	
	\textbf{m4c}: finally, in this case we have that 
	\begin{align}
		d((\bm{w}_1,\pi_1),(\bm{w}_2,\pi_2)) 
			=& C(\pi_1)\Vert\bm{w}_1\!-\!\bm{w}_2 \Vert_\infty \\
			=& C(\pi_1) \Vert \bm{w}_1\!-\!\bm{w}_3\!+\!\bm{w}_3\!-\!\bm{w}_2\!\Vert_\infty \\
			\leq& C(\pi_1) \Vert \bm{w}_1\!-\!\bm{w}_3 \Vert_\infty \\
				&+ C(\pi_3) \Vert \bm{w}_3\!-\!\bm{w}_2 \Vert_\infty \\
			=&d((\bm{w}_1,\pi_1),(\bm{w}_3,\pi_3)) \\
				&+ d((\bm{w}_3,\pi_3),(\bm{w}_2,\pi_2))
	\end{align} 
	where we used the triangle property of norms and by (m4c) it must be the case that $C(\pi_1)=C(\pi_3)$.
\end{proof}

\end{document}